\documentclass[sigconf]{acmart}

\usepackage{algorithm}
\usepackage{algorithmic}
\usepackage{amsmath}
\usepackage{amsthm}

\usepackage{booktabs}
\usepackage{siunitx}
\usepackage{multirow}

\usepackage{booktabs}    
\usepackage{multirow}   
\usepackage{makecell}   
\usepackage{caption}    
\usepackage{graphicx}   

\usepackage{booktabs}
\usepackage{multirow}

\newcommand{\std}[1]{_{\pm \text{\scriptsize #1}}}

\newtheorem{theorem}{Theorem}
\newtheorem{lemma}[theorem]{Lemma}
\newtheorem{proposition}[theorem]{Proposition}

\AtBeginDocument{%
  }

\setcopyright{acmlicensed}
\copyrightyear{2018}
\acmYear{2018}
\acmDOI{XXXXXXX.XXXXXXX}
\acmConference[Conference acronym 'XX]{Make sure to enter the correct
  conference title from your rights confirmation email}{June 03--05,
  2018}{Woodstock, NY}
\acmISBN{978-1-4503-XXXX-X/2018/06}

\renewcommand\footnotetextcopyrightpermission[1]{}

\begin{document}

\title{Topology-Adaptive Hyperbolic Graph Attention Networks Guided by the Hyperbolic Sombor Index}


\author{Haifang Cao}
\affiliation{%
  \institution{Tianjin University}
  \state{Tainjin}
  \country{China}}
\email{caohaifang@tju.edu.cn}

\author{Boan Tao}
\affiliation{%
  \institution{Tianjin University}
  \state{Tainjin}
  \country{China}}

\author{Xiyuan Gao}
\affiliation{%
  \institution{Tianjin University}
  \state{Tainjin}
  \country{China}}

\author{Timing Li}
\affiliation{%
  \institution{Tianjin University}
  \state{Tainjin}
  \country{China}}

\author{Yu Wang}
\affiliation{%
  \institution{Tianjin University}
  \state{Tainjin}
  \country{China}}

\author{Pengfei Zhu}
\affiliation{%
  \institution{Tianjin University}
  \state{Tainjin}
  \country{China}}

\renewcommand{\shortauthors}{Trovato et al.}

\begin{abstract}
  Hyperbolic geometry has emerged as a principled space for representing hierarchical graphs. However, existing hyperbolic graph neural networks typically rely on shared curvature configurations and feature-driven attention, failing to explicitly exploit local hierarchical topological patterns. To bridge this gap, we introduce the Hyperbolic Sombor Index (HSO) as a lightweight structural prior for capturing hierarchy-indicative degree stratification. Building on this, we propose \textbf{HSO-GAT}, a topology-adaptive hyperbolic graph attention network that unifies geometric adaptation and message propagation. Specifically, it comprises two complementary modules: HSO-Guided Local Curvature Adaptation, which performs adaptive node-wise geometric scaling from aggregated node-level HSO signals, and HSO-Gated Hyperbolic Graph Attention, which enables structure-aware message passing through feature-conditioned gating. Theoretically, we establish the monotonic sensitivity of edge-level HSO to degree imbalance and analyze the validity and radial scaling properties of node-adaptive hyperbolic mappings. Extensive experiments on eight benchmark datasets demonstrate that HSO-GAT consistently achieves state-of-the-art performance in both node classification and link prediction tasks.
\end{abstract}



\keywords{Graph Representation Learning, Hyperbolic Geometry,
Graph Attention Networks, Hyperbolic Sombor Index,
Degree Imbalance}


\maketitle

\pagestyle{plain}

\section{Introduction}
\label{sec:intro}

Real-world graphs frequently exhibit hierarchical, tree-like structures, which can incur substantial distortion when represented in Euclidean space because of its polynomial volume growth \cite{bronstein2017geometric, chen2025hyperbolic}. Consequently, hyperbolic geometry, with its exponential volume expansion, has emerged as the principled space for hierarchical graph representation. Although hyperbolic graph neural networks (HGNNs) \cite{chami2019hyperbolic, liu2019hyperbolic, grover2025spectro, cao2025hyperbolic} have been developed to leverage this non-Euclidean space, existing approaches, including hyperbolic graph attention networks \cite{zhang2021hyperbolic}, still fail to fully exploit the intricate local hierarchical patterns of the underlying graph topology.

This gap manifests in two critical limitations. First, existing HGNNs predominantly rely on globally or layer-wise shared curvature \cite{peng2021hyperbolic, wang2025hyperbolic}. Nodes in dense cores, intermediate regions, and expanding peripheral branches may exhibit distinct geometric requirements that cannot be accommodated by shared curvature configurations. Although recent studies have explored more flexible curvature modulation \cite{fu2021ace,yang2023kappa,guo2025graphmore,cao2026geometric}, explicit hierarchy-sensitive guidance for node-wise geometric adaptation remains underexplored. Second, graph attention is primarily driven by learned feature compatibility \cite{velivckovic2017graph,zhang2021hyperbolic}, leaving the structural roles of individual connections only implicitly modeled. Thus, feature-similar edges with different hierarchical roles may be treated similarly. Fundamentally, both limitations stem from the lack of a lightweight, hierarchy-sensitive topological prior that jointly guides local geometric adaptation and neighborhood aggregation.


\begin{figure}[t!]
  \centering
  \includegraphics[width=0.99\columnwidth]{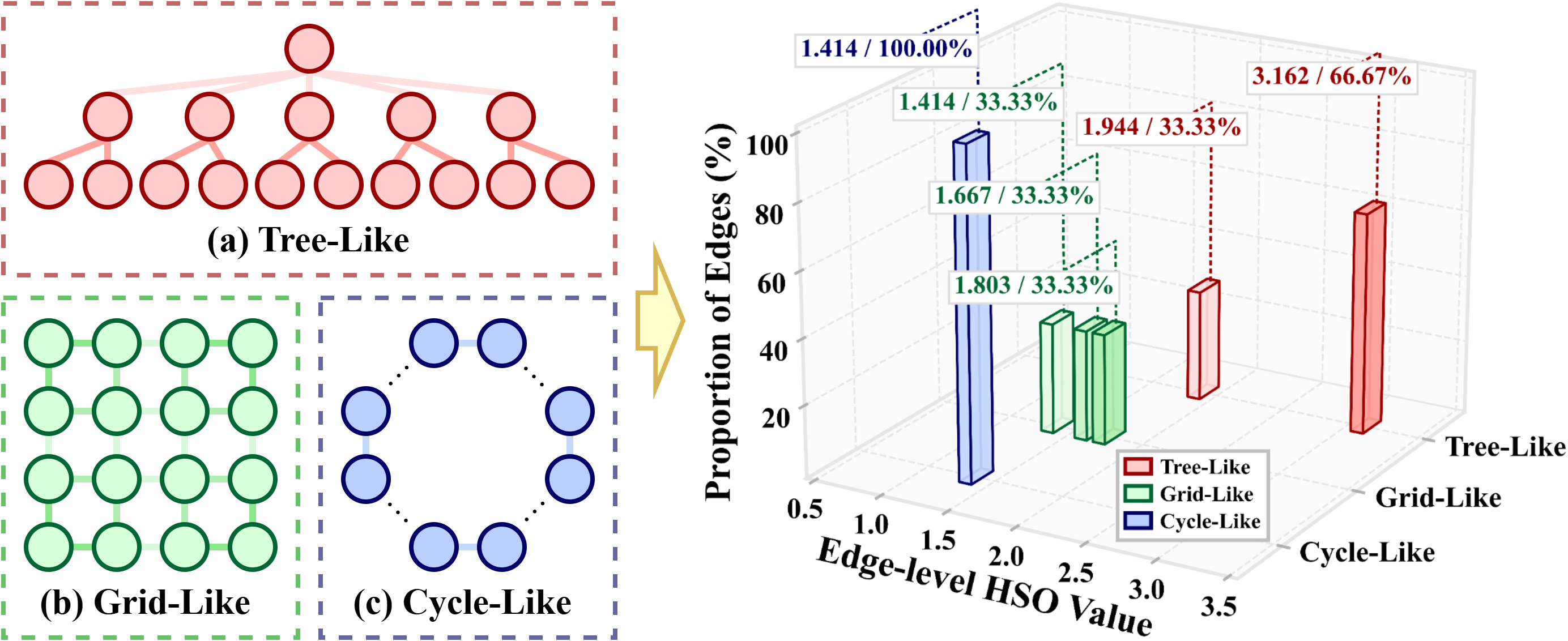}
 \caption{Topological sensitivity of the edge-level Hyperbolic Sombor Index (HSO). (Left) Three representative graph structures: (a) Tree-Like, (b) Grid-Like, and (c) Cycle-Like. (Right) The distribution of edge-level HSO values and their proportions across different structures, where labels (e.g., 3.162 / 66.67\%) denote the specific HSO value and the corresponding percentage of edges. Tree-like graphs exhibit higher and broader distributions due to degree imbalances.}
  \label{fig: introduction}
\end{figure}

To meet this need, we turn to the Hyperbolic Sombor Index (HSO) \cite{barman2026geometric,das2025hyperbolic}, a degree-based descriptor originally developed in chemical graph theory. At the edge level, HSO increases monotonically with the degree imbalance between its endpoint nodes. This property assigns larger values to hub-periphery connections while yielding lower values for degree-balanced edges, making HSO sensitive to the degree stratification that characterizes hierarchical organization. As illustrated in Figure~\ref{fig: introduction}, tree-, grid-, and cycle-like graphs with the same number of nodes exhibit distinct edge-level HSO distributions, with the hierarchical tree showing higher values and a broader distribution compared to grid-like and cycle-like graphs. 
This observation motivates HSO as a prior for hierarchy-indicative degree stratification rather than as a universal measure of graph hierarchy. Despite these favorable characteristics, existing studies mainly treat HSO as a graph-level analytical index and molecular descriptor \cite{barman2025chemical,barman2026m}, while its potential as a fine-grained structural prior for deep graph representation learning remains largely unexplored.

Building on this insight, we propose \textbf{HSO-GAT}, a topology-adaptive hyperbolic graph attention network guided by the Hyperbolic Sombor Index. Here, \emph{topology-adaptive} does not refer to modifying the graph connectivity; instead, it denotes adapting geometric modeling and message propagation according to the hierarchy-indicative topological patterns characterized by HSO. The central idea is to exploit HSO at two complementary granularities: edge-level HSO characterizes the structural role of individual connections, while its node-wise aggregation summarizes the local hierarchical context surrounding each node. In this way, HSO-GAT establishes a unified pathway from graph topology to hyperbolic geometry and attention-based propagation. 

Specifically, HSO-GAT comprises two complementary components. First, the \emph{HSO-Guided Local Curvature Adaptation} (HLCA) aggregates edge-level HSO to construct a node-wise structural descriptor. By modulating a learnable global reference curvature with this local signal, HLCA enables topology-dependent geometric scaling, allowing nodes at different hierarchical positions to be represented at local geometric scales. Second, the \emph{HSO-Gated Hyperbolic Graph Attention} (HSGA) injects edge-level HSO into tangent-space message passing through feature-conditioned gating and head-specific control, adaptively incorporating hierarchy-indicative cues without overriding semantic compatibility. 

Our main contributions are summarized as follows:

\begin{itemize}
    \item We pioneer the incorporation of the Hyperbolic Sombor Index (HSO) into deep graph learning, establishing it as a lightweight structural prior that explicitly captures hierarchy-indicative degree stratification.
    \item We propose HSO-GAT, a topology-adaptive framework that combines HLCA for node-wise curvature modulation with HSGA for hierarchy-aware attention, thereby unifying geometric adaptation and message propagation under a shared HSO prior.
    \item We theoretically analyze the monotonic sensitivity of edge-level HSO to degree imbalance and the validity and radial scaling properties of node-adaptive hyperbolic mappings, with extensive experiments demonstrating state-of-the-art performance across eight benchmarks.
\end{itemize}

\section{Related Work}
\label{sec:related_work}

\subsection{Hyperbolic Graph Neural Networks}
Hyperbolic graph neural networks (HGNNs) exploit the exponential volume growth of hyperbolic space to represent hierarchical, power-law, and tree-like graph structures. Foundational methods include HGCN~\cite{chami2019hyperbolic}, which introduces hyperbolic graph convolution via tangent-space transformations, and HAT~\cite{zhang2021hyperbolic}, which extends attention-based aggregation to hyperbolic space. Building on these foundations, subsequent studies have explored structure-adaptive hyperbolic geometries. ACE-HGNN~\cite{fu2021ace} learns task-dependent curvature through reinforcement learning, whereas $\kappa$HGCN~\cite{yang2023kappa} couples discrete graph curvature with continuous manifold curvature. More recently, GraphMoRE~\cite{guo2025graphmore} and ARGNN~\cite{wang2026adaptive} enable finer-grained geometric adaptation: the former constructs node-personalized mixed-curvature spaces, while the latter learns a continuous anisotropic Riemannian metric tensor field. Concurrently, hyperbolic attention has continued to evolve: HHGAT~\cite{park2024hyperbolic} incorporates meta-path instances into hyperbolic attention to capture relational semantics, whereas HypHGT~\cite{park2026hyperbolic} employs relation-specific hyperbolic attention to model both local and global dependencies.

Despite these advances, geometric adaptation and attention modeling have largely been studied in isolation, without a unified structural prior that jointly guides both processes. HSO-GAT addresses this gap by leveraging the Hyperbolic Sombor Index (HSO) as a lightweight, degree-imbalance-aware topological descriptor that simultaneously modulates node-wise curvature and structure-aware attention.

\subsection{Hyperbolic Sombor Index} 

The Hyperbolic Sombor Index (HSO) was introduced as a degree-based topological descriptor motivated by the geometric properties of hyperbolas, with the original study examining its predictive power, structure sensitivity, and degeneracy on hydrocarbon isomers~\cite{barman2026geometric}. The practical relevance of HSO was subsequently investigated through molecular property prediction of benzenoid hydrocarbons~\cite{barman2025chemical}. 
Further studies have expanded its theoretical foundations, ranging from deriving bounds and analyzing extremal properties for various graph classes~\cite{das2025hyperbolic,li2025hyperbolic,rada2026extremal,albalahi2025hyperbolic}, to developing M-polynomial formulations for chemical graph families~\cite{barman2026m}.

However, existing studies have primarily treated HSO as a graph-level analytical index or a molecular descriptor, without exploring its potential for representation learning. To the best of our knowledge, this work is the first to incorporate HSO into deep graph learning as an explicit topological prior.

\begin{figure*}[ht]
    \centering
    \includegraphics[width=0.95\textwidth]{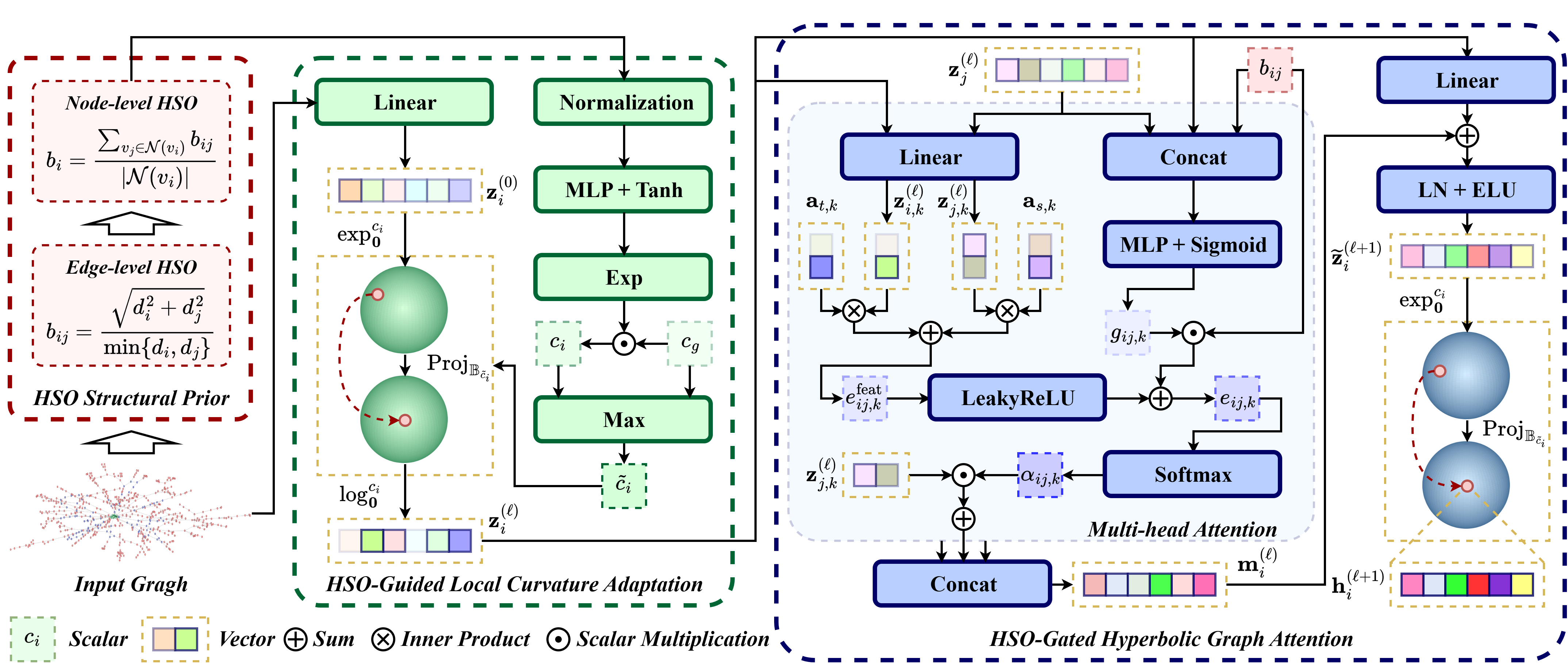}
    \caption{
    Overview of the proposed HSO-GAT framework. Edge-level HSO $b_{ij}$ captures local degree imbalance, and its neighborhood aggregation yields the node-level HSO $b_i$. The HLCA module transforms $b_i$ into node-wise curvature $c_i$ to guide adaptive hyperbolic mappings while keeping representations in a shared Poincar\'e ball. The HSGA module performs multi-head attention in the tangent space, combining feature compatibility with a gated edge-level HSO bias. By coupling HLCA and HSGA, HSO-GAT enables topology-adaptive geometric modeling and structure-aware message propagation.
    }
    
    \label{figure: framework}
\end{figure*}

\section{Preliminary}
\label{sec:preliminary}

\subsection{Notations}

A graph is formally defined as $\mathcal{G}=(\mathcal{V},\mathcal{E})$, where 
$\mathcal{V}=\{v_1,v_2,\ldots,v_N\}$ is the set of $N$ nodes and 
$\mathcal{E}\subseteq \mathcal{V}\times\mathcal{V}$ denotes the set of $m$ edges. 
The connectivity of $\mathcal{G}$ is encoded by the adjacency matrix 
$\mathbf{A}\in\{0,1\}^{N\times N}$, where $A_{ij}=1$ if 
$(v_i,v_j)\in\mathcal{E}$ and $A_{ij}=0$ otherwise. Each node $v_i$ is associated 
with a feature vector $\mathbf{x}_i\in\mathbb{R}^{d_{in}}$, and these feature 
vectors are stacked into the feature matrix $\mathbf{X}\in\mathbb{R}^{N\times d_{in}}$. 
The neighborhood of $v_i$ is denoted by 
$\mathcal{N}(v_i)=\{v_j\mid (v_i,v_j)\in\mathcal{E}\}$, and the degree of $v_i$ 
is defined as $d_i=|\mathcal{N}(v_i)|$.


\subsection{Hyperbolic Geometry}

Hyperbolic space is a non-Euclidean space with constant negative curvature, which is suitable for modeling hierarchical and tree-like structures. 
In this paper, we adopt the Poincar\'e ball model as the underlying hyperbolic 
space. Given a positive curvature parameter $c>0$, the $d$-dimensional 
Poincar\'e ball with curvature $-c$ is defined as
\begin{equation}
\mathbb{B}_c^d = \left\{\mathbf{x}\in\mathbb{R}^{d}: c\|\mathbf{x}\|^2 < 1\right\},
\end{equation}
where $\|\cdot\|$ denotes the Euclidean norm. 

Since standard Euclidean operations are not directly compatible with the geometry of the Poincar\'e ball, hyperbolic graph neural networks typically perform transformations in the tangent space, which serves as a local Euclidean approximation of the manifold. We adopt the tangent space at the origin $\mathcal{T}_{\mathbf{0}}\mathbb{B}_c^d$, and the exponential and logarithmic maps define the transformations between this tangent space and the hyperbolic space. Specifically, the exponential map $\exp_{\mathbf{0}}^{c}: \mathcal{T}_{\mathbf{0}}\mathbb{B}_c^d \rightarrow \mathbb{B}_c^d$ projects tangent vectors onto the Poincar\'e ball, whereas the logarithmic map $\log_{\mathbf{0}}^{c}: \mathbb{B}_c^d \rightarrow \mathcal{T}_{\mathbf{0}}\mathbb{B}_c^d$ maps hyperbolic embeddings back to the tangent space. For $\mathbf{v}\in\mathcal{T}_{\mathbf{0}}\mathbb{B}_c^d$ and $\mathbf{x}\in\mathbb{B}_c^d$, the two maps are defined as

\begin{equation}
\exp_{\mathbf{0}}^{c}(\mathbf{v}) = \tanh\left(\sqrt{c}\|\mathbf{v}\|
\right)\frac{\mathbf{v}}{\sqrt{c}\|\mathbf{v}\|},
\end{equation}
\begin{equation}
\log_{\mathbf{0}}^{c}(\mathbf{x}) = \operatorname{artanh}\left(\sqrt{c}\|
\mathbf{x}\|\right)\frac{\mathbf{x}}{\sqrt{c}\|\mathbf{x}\|}.
\end{equation}

To keep embeddings within the valid Poincar\'e ball, we further apply a projection operator:

\begin{equation}
\mathrm{Proj}_{\mathbb{B}_c}(\mathbf{x}) = \frac{\mathbf{x}}{\max\left(1, \frac{\sqrt{c}\|\mathbf{x}\|}{1-\epsilon}\right)},
\end{equation}
where $\epsilon>0$ is a small constant. 
This operation prevents embeddings from crossing or becoming overly close to the boundary of the Poincar\'e ball.

\subsection{Hyperbolic Sombor Index}\label{Hyperbolic Sombor Index}
Inspired by the Hyperbolic Sombor Index (HSO), a degree-based topological descriptor originally developed in chemical graph theory~\cite{barman2026geometric, das2025hyperbolic}, we extract its fundamental edge-wise component to explicitly capture local degree imbalance. 
For an edge $(v_i,v_j)\in\mathcal{E}$, the edge-level HSO is defined as:
\begin{equation}
{b}_{ij} = \frac{\sqrt{d_i^2+d_j^2}}{\min\{d_i,d_j\}},
\end{equation}
where $d_i$ and $d_j$ denote the degrees of nodes $v_i$ and $v_j$, respectively. The edge-level HSO value characterizes the degree imbalance between two incident nodes. 
Such degree-stratified patterns are closely associated with hierarchical organizations in complex graphs.


\section{Method}

\subsection{Overall Framework}
To overcome the limitations of globally shared curvature and feature-driven attention in conventional hyperbolic graph neural networks, we  propose \textbf{HSO-GAT}. The central idea is to leverage the Hyperbolic Sombor Index (HSO) as a local hierarchy-aware structural prior, which captures degree-imbalance and hub-induced expansion patterns, to guide both node-wise curvature modulation and structure-aware attention propagation. HSO-GAT consists of two coupled modules: (i) an HSO-guided local curvature adaptation (HLCA) mechanism that modulates node-wise geometric scales during hyperbolic mapping, and (ii) an HSO-gated hyperbolic graph attention (HSGA) mechanism that injects hierarchy-aware topological priors into neighborhood message passing. The overall architecture is illustrated in Figure~\ref{figure: framework}.

\subsection{HSO-Guided Local Curvature Adaptation}


The shared-curvature assumption in hyperbolic GNNs may overlook hierarchical heterogeneity among nodes. Since nodes at different hierarchical positions require different geometric scales, we leverage HSO as a local hierarchy-aware prior to guide node-wise curvature modulation.

\subsubsection{Node-level HSO Structural Prior}

As defined in the preliminary, the edge-level HSO $b_{ij}$ captures degree disparity. Its validity as a hierarchical prior is theoretically supported by its monotonic sensitivity to local degree imbalance:

\begin{lemma}[Hierarchical Monotonicity of Edge-level HSO]
\label{lemma:hso_monotonicity}
For any edge $(v_i,v_j)\in\mathcal{E}$ with $d_i\ge d_j$, let the degree ratio be $r = d_i/d_j \ge 1$. The edge-level HSO $b_{ij} = \sqrt{r^2+1}$ is strictly monotonically increasing with respect to $r$.
\end{lemma}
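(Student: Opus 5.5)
The plan has two steps: rewrite $b_{ij}$ as a function of the degree ratio $r$ alone, then verify that this one-variable function is strictly increasing on $[1,\infty)$.

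\emph{Step 1: reduction to $r$.} Since $d_i\ge d_j$, the minimum in the definition of $b_{ij}$ (Section~\ref{Hyperbolic Sombor Index}) is $\min\{d_i,d_j\}=d_j$. The edge $(v_i,v_j)$ exists, so $d_j\ge 1>0$ and we may factor $d_j^2$ out of the radicand:
\begin{equation*}
b_{ij}=\frac{\sqrt{d_i^2+d_j^2}}{d_j}=\frac{d_j\sqrt{(d_i/d_j)^2+1}}{d_j}=\sqrt{r^2+1}.
\end{equation*}
This gives the closed form asserted in Lemma~\ref{lemma:hso_monotonicity}. The expression is symmetric in $i$ and $j$, so the assumption $d_i\ge d_j$ only fixes a labeling and loses no generality.

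\emph{Step 2: monotonicity.} Define $f(r)=\sqrt{r^2+1}$ for real $r\ge 1$. The function $f$ is differentiable there, with
\begin{equation*}
f'(r)=\frac{r}{\sqrt{r^2+1}}>0 \quad\text{for } r\ge 1,
\end{equation*}
so $f$ is strictly increasing on $[1,\infty)$. A derivative-free argument also works: if $1\le r_1<r_2$, then $r_1^2+1<r_2^2+1$, and $\sqrt{\cdot}$ is strictly increasing on $[0,\infty)$.

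There is no substantive obstacle. The only point needing care is that $r$ ranges over rational values $d_i/d_j$. The monotonicity should therefore be stated for $f$ on the real interval $[1,\infty)$, and then restricted to the attainable ratios. This also yields the consequences used later: the minimum $b_{ij}=\sqrt{2}$ is attained exactly at degree-balanced edges ($r=1$), and $b_{ij}$ grows asymptotically linearly in $r$, since $r<f(r)\le r+1$.
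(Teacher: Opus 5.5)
Your proposal is correct and follows essentially the same route as the paper: substitute $\min\{d_i,d_j\}=d_j$ to obtain $b_{ij}=\sqrt{r^2+1}$, then conclude strict monotonicity from $f'(r)=r/\sqrt{r^2+1}>0$ on $[1,\infty)$. Your derivative-free alternative and your remarks on rational ratios, the minimum $\sqrt{2}$, and $r<f(r)\le r+1$ are all valid, but the lemma does not need them.
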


The proof is provided in Appendix B.1. Lemma~\ref{lemma:hso_monotonicity} guarantees that edges with substantial degree disparities inherently receive higher topological weights. Building upon this theoretical property, we aggregate the edge-level values to obtain the node-level HSO:

\begin{equation}
b_{i} = \frac{\sum_{v_j\in\mathcal{N}(v_i)} b_{ij}}{|\mathcal{N}(v_i)|},
\end{equation}
where $\mathcal{N}(v_i)$ denotes the neighborhood of node $v_i$. Thus, $b_i$ summarizes the degree-imbalance patterns surrounding node $v_i$ and provides stable topological guidance for the subsequent node-wise curvature modulation.




\subsubsection{HSO-driven Node Curvature Modulation}

Given the node-level HSO $b_i$, we first obtain the normalized signal $\bar{b}_i = \frac{b_i - \min_{\mathcal{V}} b}{\max_{\mathcal{V}} b - \min_{\mathcal{V}} b + \epsilon}$ to reduce scale variation, where $\epsilon>0$ ensures numerical stability. The normalized structural signal is then transformed by a lightweight modulation network to define the node-adaptive curvature parameter:
\begin{equation}
c_i = c_g \exp\left(\tanh\left(\mathrm{MLP}\left(\bar{b}_i\right)\right)\right),
\end{equation}
where $c_g>0$ is a learnable global reference curvature parameter. The bounded HSO-conditioned modulation adjusts the local geometric scale around this reference. Consequently, $c_i$ serves as a node-adaptive curvature parameter during hyperbolic mappings rather than defining an independent hyperbolic space.

\subsubsection{Node-adaptive Hyperbolic Mapping}

With the node-adaptive curvature parameter $c_i$, we map Euclidean input features into the Poincar\'e ball through node-adaptive exponential mapping. Given the input feature $\mathbf{x}_i \in \mathbb{R}^{d_{in}}$, its initial tangent-space representation $\mathbf{z}_i^{(0)} \in \mathbb{R}^d$ is
\begin{equation}
\mathbf{z}_i^{(0)} = \mathbf{W}_{\mathrm{in}}\mathbf{x}_i,
\end{equation}
where $\mathbf{W}_{\mathrm{in}} \in \mathbb{R}^{d \times d_{in}}$ is the input transformation matrix.
The initial hyperbolic representation $\mathbf{h}_i^{(0)} \in \mathbb{B}_{c_g}^d$ is obtained through node-adaptive exponential mapping followed by projection onto the shared Poincar\'e ball:



\begin{equation}
\mathbf{h}_i^{(0)} = \mathrm{Proj}_{\mathbb{B}_{\tilde{c}_i}}\left(\exp_{\mathbf{0}}^{c_i}\left(\mathbf{z}_i^{(0)}\right)\right),
\quad \tilde{c}_i = \max\left(c_g, c_i\right).
\end{equation}


The projection operator serves as radial clipping with the effective
curvature $\tilde{c}_i=\max(c_g,c_i)$. Since $\tilde{c}_i \ge c_g$, the
projected representation always lies within the shared Poincar\'e ball
$\mathbb{B}_{c_g}^d$, keeping all nodes in a comparable coordinate domain;
meanwhile, $\tilde{c}_i \ge c_i$ guarantees a uniform margin
$\sqrt{c_i}\|\mathbf{h}_i^{(\ell)}\| \le 1-\epsilon$, so that the subsequent
node-adaptive logarithmic map $\log_{\mathbf{0}}^{c_i}$ is always
well-defined and numerically stable (see Proposition~\ref{prop:log_validity}).

\begin{theorem}[Validity and Radial Metric Scaling of Node-Adaptive Exponential Mapping]
\label{thm:exp_mapping}

For a tangent vector $\mathbf{v}_i \in \mathbb{R}^d$ and positive scaling factor $c_i>0$, the node-adaptive exponential mapping $\hat{\mathbf{h}}_i = \exp_{\mathbf{0}}^{c_i}(\mathbf{v}_i) = \frac{\tanh(\sqrt{c_i}\|\mathbf{v}_i\|)}{\sqrt{c_i}\|\mathbf{v}_i\|} \mathbf{v}_i$ satisfies:
(1) \textbf{Mapping Validity}: $c_i\|\hat{\mathbf{h}}_i\|^2 < 1$;
(2) \textbf{Euclidean Radial Compression}: For fixed $\rho = \|\mathbf{v}_i\| > 0$, the Euclidean radius $R(c_i, \rho) = \|\hat{\mathbf{h}}_i\| = \frac{\tanh(\sqrt{c_i}\rho)}{\sqrt{c_i}}$ is strictly decreasing with respect to $c_i$;
(3) \textbf{Normalized Radial Expansion}: The normalized radius $\tilde{R}(c_i, \rho) = \sqrt{c_i}\|\hat{\mathbf{h}}_i\| = \tanh(\sqrt{c_i}\rho)$ is strictly increasing with respect to $c_i$.
\end{theorem}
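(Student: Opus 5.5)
The plan is to reduce all three claims to elementary facts about one-variable functions. Write $\rho=\|\mathbf{v}_i\|$ and $s=\sqrt{c_i}\rho$. First I will dispose of the degenerate case $\mathbf{v}_i=\mathbf{0}$. There the map is understood by its limit $\tanh(s)/s\to 1$, so $\hat{\mathbf{h}}_i=\mathbf{0}$ and claim (1) is immediate.

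For $\rho>0$, the norm of the image is $\|\hat{\mathbf{h}}_i\|=\tanh(s)/\sqrt{c_i}$. This holds because $\mathbf{v}_i/\|\mathbf{v}_i\|$ has unit norm and $\tanh(s)>0$ for $s>0$. Hence $c_i\|\hat{\mathbf{h}}_i\|^2=\tanh^2(s)$. Claim (1) then follows from $|\tanh(s)|<1$ for every real $s$, which holds since $\tanh(s)=(e^{s}-e^{-s})/(e^{s}+e^{-s})$ and $e^{-s}>0$.

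Claim (3) is the easiest monotonicity. The normalized radius is $\tilde R=\tanh(\sqrt{c_i}\rho)$. It is a composition of $c\mapsto\sqrt{c}\rho$, which is strictly increasing on $(0,\infty)$ for $\rho>0$, with $\tanh$, which is strictly increasing since $\tanh'=\mathrm{sech}^2>0$.

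For claim (2), I will substitute $u=\sqrt{c_i}$, which is strictly increasing in $c_i$. This gives $R=\rho\, g(u\rho)$ with $g(s)=\tanh(s)/s$. It then suffices to show $g$ is strictly decreasing on $(0,\infty)$. We have
\[
g'(s)=\frac{s\,\mathrm{sech}^2(s)-\tanh(s)}{s^2},
\]
so the sign of $g'$ is the sign of $s\,\mathrm{sech}^2(s)-\tanh(s)$. Multiplying by $\cosh^2(s)>0$, this is the sign of
\[
s-\sinh(s)\cosh(s)=s-\tfrac12\sinh(2s).
\]
This expression is negative for $s>0$ because $\sinh(t)>t$ for $t>0$: take $t=2s$, and note $\sinh(t)-t$ vanishes at $0$ and has derivative $\cosh(t)-1>0$. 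So $g'<0$, and $R$ is strictly decreasing in $c_i$.

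The only step requiring any care is the sign analysis in (2). A naive derivative in $c_i$ mixes the $c_i^{-1/2}$ prefactor with the $\tanh$ argument. The substitution $u=\sqrt{c_i}$ together with the inequality $\sinh t>t$ resolves it cleanly.
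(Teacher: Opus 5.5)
Your proof is correct and follows the paper's route: the same degenerate-case handling, the identity $c_i\|\hat{\mathbf{h}}_i\|^2=\tanh^2(\sqrt{c_i}\rho)$ for (1), and composition of increasing maps for (3). For (2) you use the same substitution $t=\sqrt{c_i}$ and reach the same derivative numerator $s\,\mathrm{sech}^2(s)-\tanh(s)$; the only cosmetic difference is that the paper shows this numerator is negative by differentiating it again ($q(0)=0$, $q'(x)=-2x\,\mathrm{sech}^2(x)\tanh(x)<0$), whereas you multiply by $\cosh^2(s)$ and reduce to $\sinh(2s)>2s$, which is equally valid.
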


The proof is provided in Appendix B.2.
Unlike conventional hyperbolic graph neural networks using a globally shared curvature during exponential mapping, HSO-GAT introduces the node-adaptive curvature parameter $c_i$ into the logarithmic and exponential transformations to adjust the local radial scaling of node representations while preserving a shared hyperbolic embedding space.

For layer-wise propagation, each layer first applies the node-adaptive logarithmic map to project the current hyperbolic representation $\mathbf{h}_i^{(\ell)} \in \mathbb{B}_{c_g}^d$ back to the tangent space for subsequent attention computation, yielding $\mathbf{z}_i^{(\ell)} \in \mathbb{R}^d$:

\begin{equation}
\mathbf{z}_i^{(\ell)} = \log_{\mathbf{0}}^{c_i}\left(\mathbf{h}_i^{(\ell)}\right).
\end{equation}

\begin{proposition}[Domain Validity of Node-Adaptive Logarithmic Mapping]
\label{prop:log_validity}
Let $\tilde{c}_i=\max(c_g,c_i)$ and
$\mathbf{h}_i = \mathrm{Proj}_{\mathbb{B}_{\tilde{c}_i}}\!\left(\exp_{\mathbf{0}}^{c_i}(\mathbf{v}_i)\right)$
for any $\mathbf{v}_i\in\mathbb{R}^d$. Then
(1) $\mathbf{h}_i\in\mathbb{B}_{c_g}^d$, i.e., all node representations
reside in the shared Poincar\'e ball; and
(2) $\sqrt{c_i}\,\|\mathbf{h}_i\| \le 1-\epsilon < 1$, hence
$\log_{\mathbf{0}}^{c_i}(\mathbf{h}_i)$ is well-defined with a uniform
numerical margin $\epsilon$.
\end{proposition}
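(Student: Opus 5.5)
The plan is to work directly from the definition of the projection operator, which rescales any input radially onto a closed ball. Write $\hat{\mathbf{h}}_i=\exp_{\mathbf{0}}^{c_i}(\mathbf{v}_i)$ and set
\[
s_i=\max\!\left(1,\frac{\sqrt{\tilde c_i}\,\|\hat{\mathbf{h}}_i\|}{1-\epsilon}\right),
\]
so that $\mathbf{h}_i=\hat{\mathbf{h}}_i/s_i$. The first step is the single key inequality
\[
\sqrt{\tilde c_i}\,\|\mathbf{h}_i\|\le 1-\epsilon,
\]
proved by two cases. If $s_i=1$, then by definition $\sqrt{\tilde c_i}\|\hat{\mathbf{h}}_i\|\le 1-\epsilon$ and $\mathbf{h}_i=\hat{\mathbf{h}}_i$. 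Otherwise $s_i=\sqrt{\tilde c_i}\|\hat{\mathbf{h}}_i\|/(1-\epsilon)>1$, so $\hat{\mathbf{h}}_i\neq\mathbf{0}$ and $\sqrt{\tilde c_i}\|\mathbf{h}_i\|=1-\epsilon$ exactly. No property of the exponential map is needed here beyond its output being a finite vector. One could also note via Theorem~\ref{thm:exp_mapping}(1) that $\sqrt{c_i}\|\hat{\mathbf{h}}_i\|<1$, but this is not required. The case $\mathbf{v}_i=\mathbf{0}$ is covered by the first case, since then $\hat{\mathbf{h}}_i=\mathbf{0}$ (by the continuous extension of $\tanh(x)/x\to 1$).

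Next, both claims follow by comparing curvatures, using monotonicity of $\sqrt{\cdot}$. Since $\tilde c_i\ge c_g>0$,
\[
\sqrt{c_g}\,\|\mathbf{h}_i\|\le\sqrt{\tilde c_i}\,\|\mathbf{h}_i\|\le 1-\epsilon<1 .
\]
Squaring gives $c_g\|\mathbf{h}_i\|^2<1$, which is claim (1), $\mathbf{h}_i\in\mathbb{B}^d_{c_g}$. In the same way, $\tilde c_i\ge c_i$ gives $\sqrt{c_i}\|\mathbf{h}_i\|\le 1-\epsilon$, which is the first part of claim (2).

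Finally, well-definedness of the logarithmic map follows. Because $\sqrt{c_i}\|\mathbf{h}_i\|\in[0,1-\epsilon]$ lies strictly inside the domain $(-1,1)$ of $\operatorname{artanh}$, the value $\log_{\mathbf{0}}^{c_i}(\mathbf{h}_i)$ is finite. At $\mathbf{h}_i=\mathbf{0}$ the factor $\operatorname{artanh}(t)/t$ extends continuously to $1$, so the map is defined there as well. The uniform margin is then the bound
\[
\|\log_{\mathbf{0}}^{c_i}(\mathbf{h}_i)\|\le \frac{\operatorname{artanh}(1-\epsilon)}{\sqrt{c_i}},
\]
which holds because $\operatorname{artanh}$ is increasing.

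The only delicate point I expect is bookkeeping rather than real difficulty: handling the zero vector in both maps, and noting that $\epsilon\in(0,1)$ is needed for $1-\epsilon>0$. I would state this assumption explicitly at the start.
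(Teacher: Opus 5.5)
Your proof is correct and follows the paper's own argument: first establish $\sqrt{\tilde c_i}\,\|\mathbf{h}_i\|\le 1-\epsilon$ from the projection, then compare curvatures using $\tilde c_i\ge c_g$ and $\tilde c_i\ge c_i$. Your two-case analysis of the $\max$ spells out the projection bound that the paper only asserts, and you are right that the paper's appeal to the exponential map's validity (Theorem~\ref{thm:exp_mapping}(1)) is not needed for this chain of inequalities.
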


The proof is provided in Appendix B.3.


\subsection{HSO-Gated Hyperbolic Graph Attention}


Traditional feature-driven attention treats local topology only implicitly. Motivated by the monotonic structural sensitivity of edge-level HSO (Lemma~\ref{lemma:hso_monotonicity}), we design an HSO-gated hyperbolic graph attention mechanism to inject degree-imbalance-aware priors into tangent-space message passing.

\begin{table*}[ht]
\centering
\resizebox{0.95\textwidth}{!}{%
\begin{tabular}{lcccccccc}
\toprule
 \textbf{Dataset} & \textbf{Pubmed} & \textbf{Computer} & \textbf{Photo} & \textbf{WikiCS} & \textbf{Airport} & \textbf{Disease} & \textbf{CS} & \textbf{ogbn-arxiv} \\
\midrule
Nodes     & 19,717  & 13,752  & 7,650   & 11,701  & 3,188   & 1,044   & 18,333  & 169,343    \\
Edges     & 44,338  & 245,861 & 119,081 & 216,123 & 18,631  & 1,043   & 81,894  & 1,166,243  \\
Classes   & 3       & 10      & 8       & 10      & 4       & 2       & 15       & 40         \\
Features  & 500     & 767     & 745     & 300     & 4       & 1,000   & 6,805   & 128        \\
$\delta$ & 3.5 & 1.5 & 2.5 & 1.5 & 1.0 & 0.0 & 2.0 & 2.5 \\
\bottomrule
\end{tabular}%
}
\caption{Statistics of the experimental datasets.}
\label{tab:dataset_stats}
\end{table*}

\subsubsection{Multi-head Feature Attention in Tangent Space}

Following the tangent-space approximation strategy, we formulate multi-head attention on the tangent representations from the node-adaptive logarithmic mapping. For the $k$-th attention head, we apply a head-specific linear transformation:

\begin{equation}
\mathbf{z}_{i,k}^{(\ell)} = \mathbf{W}_{k}^{(\ell)}\mathbf{z}_i^{(\ell)},
\end{equation}
where $K$ is the number of heads,
$d_h=d/K$ is the dimension of each head,
$\mathbf{W}_{k}^{(\ell)}\in\mathbb{R}^{d_h\times d}$ and $\mathbf{z}_{i,k}^{(\ell)}\in\mathbb{R}^{d_h}$ are the head-specific transformation matrix and representation, respectively.
For a target node $v_i$ and its neighbor $v_j$, the standard feature-based attention score is computed as

\begin{equation}
e_{ij,k}^{\mathrm{feat}} = \left\langle\mathbf{a}_{s,k},\mathbf{z}_{j,k}^{(\ell)}\right\rangle
+ \left\langle\mathbf{a}_{t,k},\mathbf{z}_{i,k}^{(\ell)}\right\rangle,
\end{equation}
where 
$\left\langle{\cdot , \cdot}\right\rangle$ denotes inner product,
$\mathbf{a}_{s,k}, \mathbf{a}_{t,k} \in \mathbb{R}^{d_h}$ are learnable parameter vectors for the source and target nodes, respectively. While this score captures semantic compatibility between node features, it lacks awareness of the local degree-imbalance pattern associated with the corresponding edge.

\subsubsection{HSO-Gated Structural Bias}

To make attention computation aware of local topology, we inject the edge-level HSO bias $b_{ij}$. Instead of directly imposing $b_{ij}$ on the attention score, we introduce a feature-conditioned gate to adaptively control its contribution for each edge and attention head:


\begin{equation}
g_{ij,k} = \sigma\left(\mathrm{MLP}_{g}\left[\mathbf{z}_{j,k}^{(\ell)}, \mathbf{z}_{i,k}^{(\ell)}, b_{ij}\right]\right),
\end{equation}
where $\left[{\cdot , \cdot}\right]$ denotes vector concatenation and $\sigma(\cdot)$ is the sigmoid function. 
With this adaptive gating mechanism, the final HSO-enhanced attention score is defined as
\begin{equation}
e_{ij,k} = \mathrm{LeakyReLU}\left(e_{ij,k}^{\mathrm{feat}}\right)+\beta_kg_{ij,k}b_{ij},
\end{equation}
where $\beta_k$ is a learnable head-level coefficient controlling the overall strength of the HSO structural bias. The normalized attention weight is then computed by

\begin{equation}
\alpha_{ij,k} = \frac{\exp(e_{ij,k})}{\sum_{v_r\in\mathcal{N}(v_i)}\exp(e_{ir,k})}.
\end{equation}

\subsubsection{Hyperbolic Message Aggregation and Residual Update}

Using the structurally modulated attention weights, the message $\mathbf{m}_{i,k}^{(\ell)}\in\mathbb{R}^{d_h}$ aggregated by the $k$-th attention head in the tangent space is then formulated as

\begin{equation}
\mathbf{m}_{i,k}^{(\ell)} = \sum_{v_j\in\mathcal{N}(v_i)}\alpha_{ij,k}\mathbf{z}_{j,k}^{(\ell)}.
\end{equation}
The outputs of the $K$ attention heads are concatenated to obtain
$\mathbf{m}_i^{(\ell)}\in
\mathbb{R}^{d}$. 
To stabilize propagation and preserve the center-node representation, we apply a residual connection followed by layer normalization and ELU activation:


\begin{equation}
\widetilde{\mathbf{z}}_i^{(\ell+1)} = \mathrm{ELU}\left(\mathrm{LayerNorm}\left(\mathbf{m}_{i}^{(\ell)}
+\mathbf{W}_{r}^{(\ell)}\mathbf{z}_i^{(\ell)}\right)\right),
\end{equation} 
where $\mathbf{W}_{r}^{(\ell)}\in\mathbb{R}^{Kd_h\times d}$ aligns the residual dimension, and $\widetilde{\mathbf z}_i^{(\ell+1)}\in\mathbb R^{Kd_h}$ denotes the updated tangent-space representation. The updated tangent-space representation is then mapped back to the hyperbolic space through the node-adaptive exponential map followed by projection operator:


\begin{equation}
\mathbf{h}_i^{(\ell+1)} = \mathrm{Proj}_{\mathbb{B}_{\tilde{c}_i}}\left(\exp_{\mathbf{0}}^{c_i}\left(\widetilde{\mathbf{z}}_i^{(\ell+1)}\right)\right).
\end{equation}
This process produces the next-layer hyperbolic representation $\mathbf{h}_i^{(\ell+1)} \in \mathbb{B}_{c_g}^d$ while preserving the topology-adaptive geometric scaling introduced by HSO-GAT.


\subsection{Decoding and Optimization Objective}

\subsubsection{Decoding}
The final-layer representation $\mathbf{h}_i^{(L)}\in\mathbb{B}_{c_g}^d$ is decoded in a
task-specific manner. For node classification, we apply the node-adaptive logarithmic
map $\log_{\mathbf 0}^{c_i}$ followed by a linear classifier, so that $c_i$ determines
the tangent coordinate in which the decision boundary is expressed. For link
prediction, the score of a candidate pair is given by the Fermi--Dirac decoder
$p(i\!\sim\!j)=\big(\exp((d_{\mathbb{B}}(\mathbf h_i^{(L)},\mathbf h_j^{(L)})^2-\tau)/t)+1\big)^{-1}$,
where $d_{\mathbb{B}}$ is the Poincar\'e distance evaluated at the shared curvature
$c_g$. The node-adaptive curvature therefore influences the decoder through the radial
position at which each node is placed in the shared ball.

\subsubsection{Curvature Regularization}

Although node-wise curvature provides stronger local geometric adaptability, unconstrained curvature learning may cause node-wise curvatures to deviate excessively from the global reference, leading to unstable local geometric scales. To prevent this issue, we introduce a curvature regularization term that encourages node-wise curvatures to stay close to the global geometric reference while preserving HSO-guided local variations:
\begin{equation}
\mathcal{L}_{\mathrm{curv}} = \frac{1}{|\mathcal{V}|}\sum_{v_i\in\mathcal{V}}\left(c_i-c_g\right)^2.
\end{equation}

\subsubsection{Overall Objective}

The total loss of the model is formulated as a weighted combination of the task-specific supervised loss and the curvature regularization loss:
\begin{equation}
\mathcal{L} = \mathcal{L}_{\mathrm{task}}+\lambda_{\mathrm{curv}}\mathcal{L}_{\mathrm{curv}},
\end{equation}
where $\lambda_{\mathrm{curv}}$ is a hyperparameter controlling the strength of curvature regularization.

\begin{table*}[ht]
    \centering
    \small 
    \setlength{\tabcolsep}{4pt} 
    
    \begin{tabular}{lcccccccc}
        \toprule
        \multirow{2}{*}{\textbf{Method}} & PubMed & Computer & Photo & WikiCS & Airport & Disease & CS & ogbn-arxiv \\
         & ($\delta=3.5$) & ($\delta=1.5$) & ($\delta=2.5$) & ($\delta=1.5$) & ($\delta=1.0$) & ($\delta=0.0$) & ($\delta=2.0$) & ($\delta=2.5$) \\
        \midrule
        
        GCN (2017)          & $78.45 \std{0.36}$ & $90.17 \std{0.69}$ & $92.84 \std{0.25}$ & $74.29 \std{1.06}$ & $83.19 \std{1.76}$ & $79.73 \std{1.88}$ & $93.10 \std{0.14}$ & $68.36 \std{0.28}$ \\
        GraphSAGE (2017)    & $76.88 \std{0.56}$ & $89.38 \std{0.98}$ & $91.27 \std{2.44}$ & $74.22 \std{1.36}$ & $88.55 \std{0.84}$ & $75.29 \std{2.76}$ & $92.55 \std{0.19}$ & $66.86 \std{0.50}$ \\
        GAT (2018)          & $77.23 \std{0.51}$ & $91.16 \std{0.55}$ & $92.70 \std{0.51}$ & $76.55 \std{0.88}$ & $86.10 \std{1.39}$ & $80.25 \std{0.54}$ & $93.12 \std{0.15}$ & $68.05 \std{0.30}$ \\
        \midrule
        
        HGNN (2019)         & $76.94 \std{1.12}$ & $91.35 \std{0.63}$ & $91.82 \std{0.35}$ & $74.89 \std{0.42}$ & $84.87 \std{2.09}$ & $82.16 \std{1.41}$ & $94.63 \std{0.09}$ & $67.34 \std{0.40}$ \\
        HGCN (2019)         & $77.03 \std{0.49}$ & $90.58 \std{0.21}$ & $93.19 \std{0.29}$ & $76.67 \std{0.44}$ & $88.82 \std{1.66}$ & $89.36 \std{1.02}$ & $94.24 \std{0.17}$ & $65.32 \std{0.25}$ \\
        HAT (2021)          & $75.56 \std{0.67}$ & $90.69 \std{0.37}$ & $93.38 \std{0.45}$ & $77.62 \std{0.49}$ & $89.25 \std{0.96}$ & $90.43 \std{1.12}$ & $94.38 \std{0.19}$ & $68.91 \std{0.26}$ \\
        LGCN (2021)         & $78.08 \std{0.65}$ & $90.28 \std{0.71}$ & $94.21 \std{0.43}$ & $76.58 \std{0.60}$ & $88.53 \std{1.26}$ & $91.15 \std{1.02}$ & $92.26 \std{0.17}$ & OOM \\
        HYBONET (2022)      & $77.66 \std{1.04}$ & $91.93 \std{0.60}$ & $94.55 \std{0.52}$ & $78.07 \std{0.51}$ & $91.85 \std{0.86}$ & $92.03 \std{1.21}$ & $93.99 \std{0.23}$ & OOM \\
        \midrule
        MotifRGC (2024)     & $78.33 \std{0.85}$ & $91.72 \std{0.82}$ & $93.78 \std{0.93}$ & $78.12 \std{0.72}$ & $92.31 \std{1.85}$ & $92.80 \std{1.23}$ & $94.36 \std{0.23}$ & OOM \\
        LResNet (2025)      & $77.24 \std{0.67}$ & $92.33 \std{0.24}$ & $94.58 \std{0.39}$ & $77.98 \std{0.42}$ & $92.39 \std{0.72}$ & $93.29 \std{1.45}$ & $93.84 \std{0.26}$ & $\underline{70.26} \std{0.28}$ \\
        GraphMoRE (2025)    & $\underline{78.65} \std{0.66}$ & $91.29 \std{0.44}$ & $94.07 \std{1.05}$ & $78.27 \std{0.68}$ & $92.76 \std{0.94}$ & $90.62 \std{1.43}$ & $93.15 \std{0.42}$ & OOM \\
        QGT (2026)          & $77.92 \std{0.99}$ & $\underline{92.50} \std{0.77}$ & $93.72 \std{0.23}$ & $75.74 \std{1.00}$ & $\underline{92.93} \std{1.45}$ & $\underline{94.01} \std{1.90}$ & $92.83 \std{0.14}$ & $67.94 \std{0.61}$ \\
        ARGNN (2026)        & $78.43 \std{0.65}$ & $90.27 \std{0.83}$ & $\underline{94.62} \std{0.61}$ & $\underline{79.33} \std{0.40}$ & $91.39 \std{1.93}$ & $90.76 \std{2.35}$ & $\underline{95.47} \std{0.31}$ & OOM \\
        MRiemGNN (2026)     & $77.39 \std{1.38}$ & $91.08 \std{0.35}$ & $94.10 \std{0.35}$ & $78.51 \std{0.43}$ & $92.26 \std{0.59}$ & $91.98 \std{2.12}$ & $94.76 \std{0.16}$ & $66.45 \std{0.41}$ \\
        \midrule
        
        \textbf{HSO-GAT(Ours)}        & $\mathbf{79.81} \std{0.98}$ & $\mathbf{93.46} \std{0.19}$ & $\mathbf{95.79} \std{0.25}$ & $\mathbf{80.02} \std{0.37}$ & $\mathbf{93.70} \std{1.07}$ & $\mathbf{97.86} \std{1.24}$ & $\mathbf{96.10} \std{0.16}$ & $\mathbf{72.36} \std{0.49}$ \\
        \bottomrule
    \end{tabular}
    \caption{Node classification results. We report accuracy ($\%\pm$ standard deviation) on PubMed, Computer, Photo, WikiCS, CS and ogbn-arxiv, and Macro-F1 ($\%\pm$ standard deviation) on Airport and Disease. All scores are averaged over ten runs. The best result in each column is in bold and the runner-up is underlined. OOM denotes out of memory.}
    \label{tab:results_NC}
\end{table*}

\section{Experiments}\label{experiments}
\subsection{Experimental Settings}\label{Experimental Settings}

\textbf{Datasets.}
We evaluate HSO-GAT on eight benchmark datasets spanning diverse domains and structural characteristics. Pubmed \cite{sen2008collective} is a citation network in which nodes represent scientific publications and edges denote citation relationships. Computer and Photo \cite{shchur2018pitfalls} are derived from the Amazon co-purchase graphs, comprising computer and photography-related products as nodes, with edges reflecting frequent co-purchase patterns. WikiCS \cite{mernyei2020wiki} is a Wikipedia-based reference graph whose nodes represent computer science articles and whose edges indicate hyperlinks between them. Airport \cite{chami2019hyperbolic} models the air transportation network, with airports as nodes and commercial flight routes as edges. Disease \cite{chami2019hyperbolic} dataset simulates the disease propagation tree, where node represents a state of being infected or not by SIR disease. CS \cite{shchur2018pitfalls} is a co-authorship network in which nodes denote researchers and edges represent co-authored publications. Finally, ogbn-arxiv \cite{hu2020open} is a large-scale citation graph containing scientific papers and their citation links. Detailed statistics for all datasets are provided in Table \ref{tab:dataset_stats}.

\textbf{Baselines.}
We compare HSO-GAT with three groups of graph learning methods. The representative Euclidean GNNs include GCN \cite{kipf2016semi}, GraphSAGE \cite{hamilton2017inductive}, and GAT \cite{velivckovic2017graph}. The representative hyperbolic GNNs include HGNN \cite{liu2019hyperbolic}, HGCN \cite{chami2019hyperbolic}, HAT \cite{zhang2021hyperbolic}, LGCN \cite{zhang2021lorentzian}, and HYBONET \cite{chen2022fully}. We further compare with recent Riemannian methods, including MotifRGC \cite{sun2024motif}, LResNet \cite{he2025lorentzian}, GraphMoRE \cite{guo2025graphmore}, QGT \cite{le2026pseudo}, ARGNN \cite{wang2026adaptive}, and MRiemGNN \cite{li2026multiplex}.

\textbf{Implementation Details.}
We evaluate HSO-GAT on both node classification and link prediction, using accuracy and Macro-F1 for node classification and ROC-AUC for link prediction. For node classification, we adopt the standard split for Pubmed, random 60\%/20\%/20\% splits for Computer, Photo, and CS, the first official split for WikiCS, a stratified 70\%/15\%/15\% split for Airport, a class-balanced 30\%/10\%/60\% split for Disease, and the official split for ogbn-arxiv. For link prediction, we randomly split the undirected positive edges into training, validation, and test sets with a ratio of 85\%/5\%/10\%, while sampling an equal number of negative edges for each split. Hyperparameters are selected by grid search over learning rates $\{0.001, 0.005, 0.01, 0.02\}$, dropout rates $\{0.0, 0.1, 0.3, 0.4, 0.5, 0.6\}$, weight decay $\{0, 0.0001, 0.0005, 0.001\}$, and the number of hidden layers $\{1,2,3\}$. We report the mean and standard deviation over 10 random seeds. All experiments are implemented in PyTorch and conducted on an NVIDIA GeForce RTX 3090 GPU.

\begin{table*}[ht]
    \centering
    \small 
    \setlength{\tabcolsep}{4pt} 
    
    \begin{tabular}{lcccccccc}
        \toprule
        \multirow{2}{*}{\textbf{Method}} & PubMed & Computer & Photo & WikiCS & Airport & Disease & CS & ogbn-arxiv \\
         & ($\delta=3.5$) & ($\delta=1.5$) & ($\delta=2.5$) & ($\delta=1.5$) & ($\delta=1.0$) & ($\delta=0.0$) & ($\delta=2.0$) & ($\delta=2.5$) \\
        \midrule
        
        GCN (2017)          & $93.59 \std{0.07}$ & $93.38 \std{0.21}$ & $94.60 \std{0.37}$ & $94.19 \std{0.04}$ & $91.99 \std{0.20}$ & $72.20 \std{0.81}$ & $94.67 \std{0.06}$ & $93.27 \std{0.06}$ \\
        GraphSAGE (2017)    & $94.18 \std{0.07}$ & $93.60 \std{0.13}$ & $93.27 \std{0.44}$ & $94.66 \std{0.57}$ & $94.62 \std{0.29}$ & $70.76 \std{1.38}$ & $93.50 \std{0.03}$ & $95.35 \std{0.07}$ \\
        GAT (2018)          & $92.46 \std{0.18}$ & $94.66 \std{0.24}$ & $92.05 \std{1.22}$ & $95.03 \std{0.22}$ & $94.38 \std{0.21}$ & $79.18 \std{1.36}$ & $94.34 \std{0.08}$ & $91.25 \std{0.44}$ \\
        \midrule
        
        HGNN (2019)         & $93.47 \std{0.28}$ & $96.93 \std{0.27}$ & $97.70 \std{0.14}$ & $96.97 \std{0.10}$ & $96.78 \std{0.28}$ & $75.08 \std{1.43}$ & $96.21 \std{0.07}$ & $96.20 \std{0.18}$ \\
        HGCN (2019)         & $95.51 \std{0.03}$ & $97.72 \std{0.10}$ & $97.39 \std{0.02}$ & $98.16 \std{0.05}$ & $97.95 \std{0.11}$ & $89.59 \std{0.88}$ & $97.08 \std{0.07}$ & $97.02 \std{0.05}$ \\
        HAT (2021)          & $96.55 \std{0.04}$ & $96.95 \std{0.02}$ & $98.42 \std{0.02}$ & $97.28 \std{0.03}$ & $98.11 \std{0.11}$ & $88.91 \std{1.79}$ & $97.17 \std{0.11}$ & $\underline{97.82} \std{0.15}$ \\
        LGCN (2021)         & $96.32 \std{0.15}$ & $97.88 \std{0.12}$ & $98.10 \std{0.03}$ & $98.05 \std{0.05}$ & $98.10 \std{0.08}$ & $\underline{96.98} \std{0.62}$ & $95.43 \std{0.27}$ & OOM \\
        HYBONET (2022)      & $96.23 \std{0.17}$ & $98.42 \std{0.15}$ & $98.63 \std{0.12}$ & $98.26 \std{0.03}$ & $96.79 \std{0.31}$ & $94.62 \std{0.93}$ & $95.48 \std{0.42}$ & OOM \\
        \midrule
        MotifRGC (2024)     & $96.15 \std{0.12}$ & $98.35 \std{0.14}$ & $97.25 \std{0.09}$ & $98.05 \std{0.06}$ & $97.10 \std{0.25}$ & $92.51 \std{1.25}$ & $96.95 \std{0.09}$ & OOM \\
        LResNet (2025)      & $95.66 \std{0.11}$ & $\underline{98.54} \std{0.26}$ & $\underline{98.81} \std{0.01}$ & $98.31 \std{0.02}$ & $96.73 \std{0.68}$ & $95.59 \std{0.54}$ & $97.32 \std{0.04}$ & $97.67 \std{0.04}$ \\
        GraphMoRE (2025)    & $95.60 \std{0.35}$ & $97.81 \std{0.10}$ & $98.34 \std{0.09}$ & $98.24 \std{0.11}$ & $97.65 \std{0.28}$ & $92.74 \std{1.32}$ & $96.81 \std{0.20}$ & OOM \\
        QGT (2026)          & $94.15 \std{0.42}$ & $97.77 \std{0.36}$ & $97.36 \std{0.98}$ & $97.21 \std{0.65}$ & $95.99 \std{0.27}$ & $93.36 \std{1.33}$ & $95.37 \std{0.42}$ & $95.94 \std{0.63}$ \\
        ARGNN (2026)        & $\underline{96.67} \std{0.48}$ & $96.13 \std{0.61}$ & $97.63 \std{0.33}$ & $97.33 \std{0.37}$ & $\underline{98.81} \std{0.10}$ & $96.31 \std{1.63}$ & $\underline{98.11} \std{0.08}$ & OOM \\
        MRiemGNN (2026)     & $95.04 \std{0.21}$ & $96.39 \std{0.45}$ & $97.13 \std{0.28}$ & $\underline{98.42} \std{0.04}$ & $97.72 \std{0.32}$ & $96.26 \std{1.04}$ & $96.87 \std{0.12}$ & $97.77 \std{0.06}$ \\
        \midrule
        
        \textbf{HSO-GAT(Ours)} & $\mathbf{98.48} \std{0.08}$ & $\mathbf{98.93} \std{0.04}$ & $\mathbf{99.19} \std{0.04}$ & $\mathbf{99.13} \std{0.02}$ & $\mathbf{99.01} \std{0.21}$ & $\mathbf{98.52} \std{0.39}$ & $\mathbf{98.67} \std{0.03}$ & $\mathbf{99.21} \std{0.07}$ \\
        \bottomrule
    \end{tabular}
    \caption{Link prediction results. ROC-AUC ($\%\pm$ standard deviation) for all datasets. All scores are averaged over ten runs. The best result in each column is in bold and the runner-up is underlined. OOM denotes out of memory.}
    \label{tab:results_LP}
\end{table*}


\begin{table}[ht]
    \centering
    \small 
    \setlength{\tabcolsep}{9pt} 
    \begin{tabular}{lcccc}
        \toprule
        \multirow{2}{*}{\textbf{Method}} & \multicolumn{2}{c}{PubMed} & \multicolumn{2}{c}{Disease} \\
        \cmidrule(lr){2-3} \cmidrule(lr){4-5}
        & NC & LP & NC & LP \\
        \midrule
        Base (HAT)     & 75.56 & 96.55 & 90.43 & 88.91 \\
        w/o HLCA        & 78.54 & 97.96 & 96.01 & 97.39 \\
        w/o HSGA        & 77.66 & 98.27 & 94.89 & 97.64 \\
        w/o $\mathcal{L}_{\mathrm{curv}}$       & 78.83 & 98.39 & 96.57 & 98.11 \\
        w/ Deg. Ratio       & 79.38 & 98.35 & 97.37 & 97.96 \\
        \textbf{HSO-GAT}& \textbf{79.81} & \textbf{98.48} & \textbf{97.86} & \textbf{98.52} \\
        \bottomrule
    \end{tabular}
    \caption{Ablation study on PubMed and Disease datasets. \textit{w/ Deg. Ratio} replaces the HSO prior with the degree ratio of incident edges. The best result in each column is in bold.}
    \label{tab:ablation}
\end{table}


\subsection{Node Classification}\label{Node Classification Result}

Table~\ref{tab:results_NC} presents the node classification results, where accuracy is reported for PubMed, Computer, Photo, WikiCS, CS, and ogbn-arxiv, while Macro-F1 is used for Airport and Disease. HSO-GAT achieves the best performance across all eight datasets, outperforming the strongest baseline by margins of 0.63 to 3.85 percentage points. 

Notably, the most substantial gains occur on Disease (3.85\%) and ogbn-arxiv (2.10\%). The significant improvement on Disease ($\delta=0.0$) confirms the capability of our method in modeling extreme hierarchical structures, which is consistent with the role of HSO-gated attention in emphasizing hierarchy-informative neighbors during message passing. Meanwhile, on the large-scale ogbn-arxiv, HSO-GAT not only achieves the highest accuracy but also avoids the out-of-memory (OOM) issues encountered by several Riemannian baselines (e.g., LGCN, HYBONET, and ARGNN), indicating its practical applicability to large-scale graph learning. Furthermore, consistent improvements across networks with varying $\delta$ values (from 0.0 to 3.5) indicate that HSO-GAT effectively captures diverse structural topologies, rather than being limited to specific graph geometries.

\subsection{Link Prediction}\label{Link Prediction Result}

We further evaluate HSO-GAT on link prediction, with results reported in Table~\ref{tab:results_LP}. HSO-GAT ranks first on all datasets and improves ROC-AUC by an average of 0.87 percentage points over the strongest competing method on each dataset. 

In particular, several baselines already achieve ROC-AUC scores above 98\% on Computer, Photo, WikiCS, and Airport, leaving limited room for further improvement. Nevertheless, HSO-GAT obtains the best results on these datasets (e.g., 99.19\% on Photo). More substantial gains are observed on PubMed, Disease, and ogbn-arxiv, where HSO-GAT exceeds the runner-up by 1.81, 1.54, and 1.39 percentage points, respectively. These improvements highlight that HSO-guided local curvature adaptation effectively shapes the local geometric space, yielding more discriminative geometric relations for plausible link identification in hierarchical graphs. The consistent superiority in both saturated and structurally challenging settings further shows that HSO-derived geometric guidance is effective for identifying links governed by subtle hierarchical and degree-imbalanced patterns.

\subsection{Further Analysis}\label{Further Analysis}

\subsubsection{Ablation Study}

We perform ablation experiments on PubMed and Disease (Table~\ref{tab:ablation}) to assess the contribution of each core component. Removing either HLCA or HSGA consistently degrades performance, yet with clear task-dependent patterns. On node classification, dropping HSGA hurts more than removing HLCA (e.g., Disease: 2.97 vs. 1.85 points drop), consistent with the role of attention in aggregating hierarchy-informative neighbors. On link prediction, the trend reverses: removing HLCA causes a larger degradation than removing HSGA (Disease: 1.13 vs. 0.88 points drop), confirming that curvature adaptation is more critical for preserving discriminative distance geometry in pairwise modeling. 
The gap between Base (HAT) and HSO-GAT (e.g., 9.61 points on Disease LP) further reflects the combined effect of the two modules.
Removing \(\mathcal{L}_{\mathrm{curv}}\) yields mild yet consistent degradation, confirming that curvature regularization helps stabilize training without compromising the core HSO-driven gains. Furthermore, replacing the HSO prior with the degree ratio degrades performance, because averaging degree ratios loses the dispersion information of neighborhood imbalance that HSO preserves through the convexity of its edge-level formulation.

\begin{figure}[t!]
  \centering
  \includegraphics[width=0.95\columnwidth]{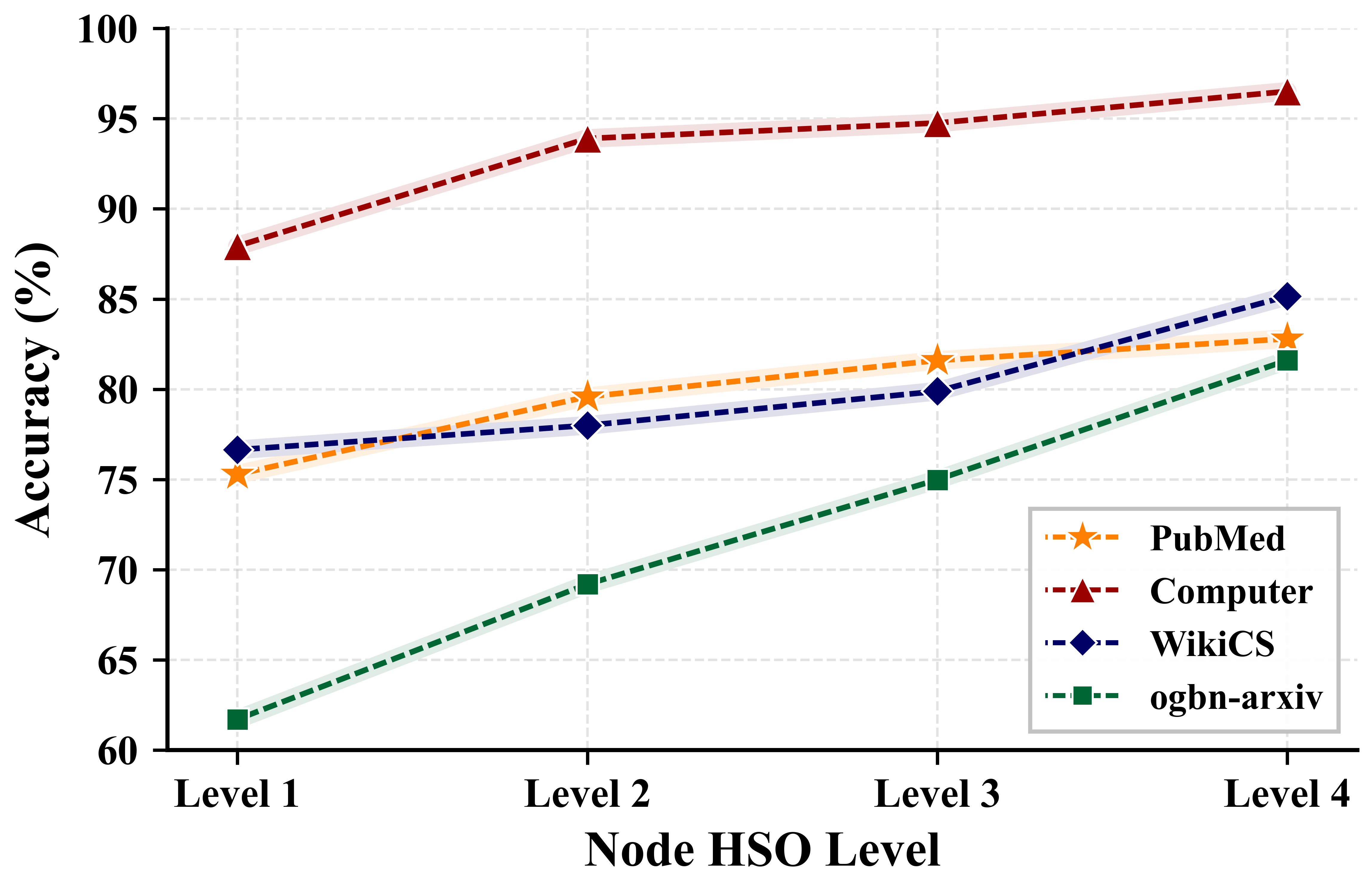}
  \caption{Node classification accuracy (\%) of HSO-GAT across different node-level HSO levels. Nodes are grouped from Level 1 to Level 4 according to increasing HSO values.}
  \label{fig:hso_quantile_trend}
\end{figure}



\subsubsection{Fine-grained Structural Analysis across HSO Levels}

To investigate the effectiveness of HSO-GAT across nodes with varying local hierarchical complexity, we partition the test nodes into four distinct HSO groups according to node-level HSO quantiles. As shown in Figure~\ref{fig:hso_quantile_trend}, all four datasets exhibit a consistent increasing trend from low to high HSO levels, where nodes with stronger hierarchy-indicative structural patterns achieve substantially better classification performance. In particular, ogbn-arxiv shows the most pronounced improvement across HSO levels, while PubMed, WikiCS, and Computer also present clear performance gains despite different degrees of saturation. This monotonic trend reveals a clear association between node-level HSO and the classification performance of HSO-GAT, which suggests that HSO offers a useful perspective for characterizing how the model adaptively allocates geometric capacity to accommodate nodes across diverse local structural regimes.

\section{Conclusion}
\label{sec:conclusion}

In this paper, we proposed HSO-GAT, a topology-adaptive hyperbolic graph attention network for hierarchical graph representation learning. By leveraging the Hyperbolic Sombor Index (HSO) as a unified structural prior, HSO-GAT couples node-wise geometric adaptation with hierarchy-aware message propagation through the complementary HLCA and HSGA modules. We further established the monotonic sensitivity of edge-level HSO to degree imbalance and analyzed the mapping validity and radial scaling properties of node-adaptive hyperbolic mappings. Extensive experiments on eight benchmarks consistently demonstrated the state-of-the-art effectiveness of HSO-GAT in both node classification and link prediction. Despite these promising results, the degree-based nature of HSO suggests that future work could explore richer topological invariants or extend the framework to dynamic graphs where hierarchical roles evolve over time.

\bibliographystyle{ACM-Reference-Format}
\bibliography{reference}

\appendix

\section{Preliminary}

\subsection{Hyperbolic Geometry}
A Riemannian manifold \cite{robbin2011introduction} $(\mathcal{M}, g)$ of dimension $n$ is a real and smooth manifold equipped with an inner product on the tangent space $g_{\mathbf{x}}: \mathcal{T}_{\mathbf{x}}\mathcal{M} \times \mathcal{T}_{\mathbf{x}}\mathcal{M} \rightarrow \mathbb{R}$ at each point $\mathbf{x} \in \mathcal{M}$, where the tangent space $\mathcal{T}_{\mathbf{x}}\mathcal{M}$ is a vector space and can be seen as a first-order local approximation of $\mathcal{M}$ around point $\mathbf{x}$. Hyperbolic space is a constant negative curvature Riemannian manifold. There are several isometric models of hyperbolic space, of which we work on the Poincar\'{e} ball model. A $d$-dimensional Poincar\'{e} ball model with curvature $-c$ ($c>0$) is defined as $\mathbb{B}_{c}^{d}=\{\mathbf{x}\in \mathbb{R}^{d}: c\Vert \mathbf{x} \Vert^2<1\}$ equipped with the Riemannian metric tensor: $g_{\mathbf{x}}^{c}=(\lambda_{\mathbf{x}}^{c})^2 g^{E}$, where $\lambda_{\mathbf{x}}^{c}:=\frac{2}{1-c\Vert \mathbf{x} \Vert^{2}}$ and $g^{E}=\mathbf{I}_{d}$ is the Euclidean metric tensor.

Since the Poincar\'{e} ball is a curved manifold, standard Euclidean vector operations (such as addition and scalar multiplication) are not closed and cannot be directly applied to points in $\mathbb{B}_c^d$. To perform algebraic operations while strictly remaining on the manifold, we adopt the framework of M\"obius gyrovector spaces. Within this framework, the standard vector addition is replaced by the M\"obius addition, defined as:
\begin{equation}
\mathbf{x} \oplus _{c} \mathbf{y} := \frac{(1+2c \langle \mathbf{x}, \mathbf{y} \rangle + c\Vert \mathbf{y} \Vert^{2})\mathbf{x} + (1-c\Vert \mathbf{x} \Vert^{2})\mathbf{y}}{1+2c \langle \mathbf{x}, \mathbf{y} \rangle + c^{2} \Vert \mathbf{x} \Vert^{2} \Vert \mathbf{y} \Vert^{2}},
\end{equation}
where $\mathbf{x},\mathbf{y} \in \mathbb{B}_{c}^{d}$. The M\"obius scalar multiplication is defined as:
\begin{equation}
r \otimes _{c} \mathbf{x} := \frac{1}{\sqrt{c}}\mathrm{tanh}\left(r\ \mathrm{tanh}^{-1}(\sqrt{c}\Vert \mathbf{x} \Vert)\right)\frac{\mathbf{x}}{\Vert \mathbf{x} \Vert},
\end{equation}
where $\mathbf{x} \in \mathbb{B}_{c}^{d} \setminus \{\mathbf{0}\}$ and $r \in \mathbb{R}$. Similarly, the M\"obius matrix-vector multiplication is defined as:
\begin{equation}
M \otimes _{c} \mathbf{x} := \frac{1}{\sqrt{c}}\mathrm{tanh}\left(\frac{\Vert M\mathbf{x} \Vert}{\Vert \mathbf{x} \Vert} \mathrm{tanh}^{-1}(\sqrt{c}\Vert \mathbf{x} \Vert)\right)\frac{M\mathbf{x}}{\Vert M\mathbf{x} \Vert},
\end{equation}
where $\mathbf{x} \in \mathbb{B}_{c}^{d} \setminus \{\mathbf{0}\}$ and $M \in \mathbb{R}^{m \times d}$. 

During the construction of hyperbolic graph neural networks, a crucial step involves the conversion between the hyperbolic space and the tangent space, facilitated by the exponential map $\mathrm{exp}_{\mathbf{x}}^{c}: \mathcal{T}_{\mathbf{x}}\mathbb{B}_{c}^{d} \rightarrow \mathbb{B}_{c}^{d}$ and the logarithmic map $\mathrm{log}_{\mathbf{x}}^{c}: \mathbb{B}_{c}^{d} \rightarrow \mathcal{T}_{\mathbf{x}}\mathbb{B}_{c}^{d}$:
\begin{equation}
\mathrm{exp}_{\mathbf{x}}^{c}(\mathbf{v}) = \mathbf{x} \oplus_{c} \left(\mathrm{tanh}\left(\sqrt{c}\frac{\lambda_{\mathbf{x}}^{c} \Vert \mathbf{v} \Vert}{2}\right)\frac{\mathbf{v}}{\sqrt{c}\Vert \mathbf{v} \Vert}\right),
\end{equation}
\begin{equation}
\mathrm{log}_{\mathbf{x}}^{c}(\mathbf{y}) = \frac{2}{\sqrt{c}\lambda_{\mathbf{x}}^{c}}\operatorname{artanh}\left(\sqrt{c}\Vert -\mathbf{x} \oplus_{c} \mathbf{y} \Vert\right)\frac{-\mathbf{x} \oplus_{c} \mathbf{y}}{\Vert -\mathbf{x} \oplus_{c} \mathbf{y} \Vert},
\end{equation}
where $\mathbf{x},\mathbf{y} \in \mathbb{B}_{c}^{d}, \mathbf{x} \neq \mathbf{y}$ and $\mathbf{v} \in \mathcal{T}_{\mathbf{x}}\mathbb{B}_{c}^{d} \setminus \{\mathbf{0}\}$. Ultimately, the definition of the hyperbolic non-linear activation incorporates the utilization of both exponential and logarithmic maps:
\begin{equation}
\sigma^{c}(\mathbf{y}) = \mathrm{exp}_{\mathbf{x}}^{c}(\sigma(\mathrm{log}_{\mathbf{x}}^{c}(\mathbf{y}))),
\end{equation}
where $\mathbf{y} \in \mathbb{B}_{c}^{d}$ and $\sigma$ is the non-linear activation in Euclidean space.

\subsection{Hyperbolic Sombor Index}\label{Hyperbolic Sombor Index}

The Sombor index is a degree-based topological index. For a graph 
$\mathcal{G}=(\mathcal{V},\mathcal{E})$, the classical graph-level Sombor index 
is defined as
\begin{equation}
\mathrm{SO}(\mathcal{G}) = \sum_{(v_i,v_j)\in\mathcal{E}}\sqrt{d_i^2+d_j^2}.
\end{equation}

The graph-level Hyperbolic Sombor Index (HSO) extends the classical Sombor index by introducing the minimum degree of the two incident nodes as a normalization factor:
\begin{equation}
\mathrm{HSO}(\mathcal{G}) = \sum_{(v_i,v_j)\in\mathcal{E}}\frac{\sqrt{d_i^2+d_j^2}}{\min\{d_i,d_j\}}.
\end{equation}

Compared with the classical Sombor index, HSO emphasizes degree imbalance between adjacent nodes through the denominator $\min\{d_i,d_j\}$, 
which makes HSO more sensitive to local hierarchical and hub-like structures.

Since $\mathrm{HSO}(\mathcal{G})$ is a summation over all edges, its edge-wise summand can be naturally used as an edge-level structural strength:
\begin{equation}
{b}_{ij} = \frac{\sqrt{d_i^2+d_j^2}}{\min\{d_i,d_j\}}.
\end{equation}

Here, $b_{ij}$ serves as the edge-level HSO prior for node-level HSO aggregation and HSO-gated attention in our model.


\section{Theoretical Proofs}

\subsection{Proof of Lemma 1}

\begin{proof}
For arbitrary edge $(v_i,v_j)\in\mathcal{E}$ satisfying $d_i\ge d_j$, substitute $\min\{d_i,d_j\}=d_j$ into the definition of edge-level HSO:
\begin{equation}
    b_{ij} = \frac{\sqrt{d_i^2+d_j^2}}{d_j} = \sqrt{\left(\frac{d_i}{d_j}\right)^2 + 1}.
\end{equation}
Let the degree ratio be $r = \frac{d_i}{d_j}$. Since $d_i \ge d_j \ge 1$, we have $r \ge 1$. 
Define the function $f(r) = \sqrt{r^2 + 1}$ for $r \in [1, +\infty)$. Taking the first derivative with respect to $r$ gives:
\begin{equation}
    f'(r) = \frac{r}{\sqrt{r^2 + 1}}.
\end{equation}
Since $r \ge 1 > 0$ and $\sqrt{r^2+1} > 0$ in the domain, it is evident that $f'(r) > 0$ holds universally. Therefore, $b_{ij}$ is strictly monotonically increasing with respect to the degree ratio $r$.
\end{proof}

\subsection{Proof of Theorem 2}
\begin{proof}
Given the tangent space representation $\mathbf{v}_i \in \mathbb{R}^d$, let its norm be $\rho = \|\mathbf{v}_i\| \ge 0$. The node-adaptive exponential mapping is defined as:
\begin{equation}
    \hat{\mathbf{h}}_i = \exp_{\mathbf{0}}^{c_i}(\mathbf{v}_i) = \frac{\tanh(\sqrt{c_i}\|\mathbf{v}_i\|)}{\sqrt{c_i}\|\mathbf{v}_i\|} \mathbf{v}_i.
\end{equation}
When $\mathbf{v}_i = \mathbf{0}$, by continuous extension of the exponential map, we have $\hat{\mathbf{h}}_i = \exp_{\mathbf{0}}^{c_i}(\mathbf{0}) = \mathbf{0}$. In this trivial case, the Mapping Validity holds since $c_i\|\hat{\mathbf{h}}_i\|^2 = 0 < 1$. 
Below, we assume $\mathbf{v}_i \neq \mathbf{0}$, i.e., $\rho > 0$. The Euclidean norm of the mapped result is:
\begin{equation}
    \|\hat{\mathbf{h}}_i\| = \frac{\tanh(\sqrt{c_i}\rho)}{\sqrt{c_i}}.
\end{equation}

\textbf{(1) Mapping Validity:} 
Calculating $c_i\|\hat{\mathbf{h}}_i\|^2$ yields:
\begin{equation}
    c_i\|\hat{\mathbf{h}}_i\|^2 = c_i \left( \frac{\tanh(\sqrt{c_i}\rho)}{\sqrt{c_i}} \right)^2 = \tanh^2(\sqrt{c_i}\rho).
\end{equation}
Since the range of the hyperbolic tangent function strictly satisfies $|\tanh(x)| < 1$ for any finite real number $x$, we have $c_i\|\hat{\mathbf{h}}_i\|^2 < 1$.

\textbf{(2) Euclidean Radial Compression:} 
Define the Euclidean absolute radius function $R(c_i, \rho) = \|\hat{\mathbf{h}}_i\| = \frac{\tanh(\sqrt{c_i}\rho)}{\sqrt{c_i}}$. 
To analyze its monotonicity with respect to $c_i$, let $t = \sqrt{c_i} > 0$. Then $R$ can be viewed as a function of $t$: $R(t) = \frac{\tanh(t\rho)}{t}$. Taking the derivative with respect to $t$:
\begin{equation}
    R'(t) = \frac{t\rho \operatorname{sech}^2(t\rho) - \tanh(t\rho)}{t^2}.
\end{equation}
Let $x = t\rho > 0$. We only need to prove that the numerator $q(x) = x\operatorname{sech}^2(x) - \tanh(x) < 0$. Taking the derivative of $q(x)$ yields:

\begin{align}
q'(x) &= \operatorname{sech}^2(x) - 2x\operatorname{sech}^2(x)\tanh(x) - \operatorname{sech}^2(x) \notag \\
      &= -2x\operatorname{sech}^2(x)\tanh(x).
\end{align}

When $x > 0$, $\operatorname{sech}^2(x) > 0$ and $\tanh(x) > 0$, so $q'(x) < 0$. Since $q(0) = 0$ and $q(x)$ is strictly decreasing, we have $q(x) < 0$ for all $x > 0$. Thus, $R'(t) < 0$, meaning the Euclidean absolute radius $R(c_i, \rho)$ is strictly monotonically decreasing with respect to $c_i$.

\textbf{(3) Normalized Radial Expansion:} 
Define the normalized radius relative to the ball boundary as $\tilde{R}(c_i, \rho) = \sqrt{c_i}\|\hat{\mathbf{h}}_i\|$. From (1), we know:
\begin{equation}
    \tilde{R}(c_i, \rho) = \sqrt{c_i} \cdot \frac{\tanh(\sqrt{c_i}\rho)}{\sqrt{c_i}} = \tanh(\sqrt{c_i}\rho).
\end{equation}
Since $\tanh(\cdot)$ is a strictly monotonically increasing function, and $\sqrt{c_i}\rho$ is strictly monotonically increasing with respect to $c_i$, their composition $\tilde{R}(c_i, \rho)$ is strictly monotonically increasing with respect to $c_i$.
\end{proof}

\subsection{Proof of Proposition 3}
By Theorem~1(1), the exponential map output
$\hat{\mathbf{h}}_i=\exp_{\mathbf{0}}^{c_i}(\mathbf{v}_i)$ satisfies
$\sqrt{c_i}\|\hat{\mathbf{h}}_i\|=\tanh(\sqrt{c_i}\|\mathbf{v}_i\|)<1$.
The projection
$\mathrm{Proj}_{\mathbb{B}_{\tilde{c}_i}}(\mathbf{x})
=\mathbf{x}/\max\!\big(1,\tfrac{\sqrt{\tilde{c}_i}\|\mathbf{x}\|}{1-\epsilon}\big)$
is norm-nonincreasing and enforces
$\sqrt{\tilde{c}_i}\|\mathbf{h}_i\|\le 1-\epsilon$.

(1) Since $\tilde{c}_i\ge c_g$, we have
$\sqrt{c_g}\|\mathbf{h}_i\|\le\sqrt{\tilde{c}_i}\|\mathbf{h}_i\|\le 1-\epsilon<1$,
i.e., $c_g\|\mathbf{h}_i\|^2<1$ and thus $\mathbf{h}_i\in\mathbb{B}_{c_g}^d$.

(2) Since $\tilde{c}_i\ge c_i$, we likewise have
$\sqrt{c_i}\|\mathbf{h}_i\|\le\sqrt{\tilde{c}_i}\|\mathbf{h}_i\|\le 1-\epsilon$.
Therefore the argument of $\operatorname{artanh}$ in
$\log_{\mathbf{0}}^{c_i}$ is bounded away from $1$ by the uniform margin
$\epsilon$, ensuring both mathematical validity and numerical stability.
$\hfill\square$

\end{document}